\newtheorem{theorem}{Theorem}[section]
\theoremstyle{definition}
\newcommand{\defeq}{\overset{\text{\tiny def}}{=}}
\newcolumntype{L}[1]{>{\raggedright\let\newline\\\arraybackslash\hspace{0pt}}m{#1}}
\newcolumntype{C}[1]{>{\centering\let\newline\\\arraybackslash\hspace{0pt}}m{#1}}
\newcolumntype{R}[1]{>{\raggedleft\let\newline\\\arraybackslash\hspace{0pt}}m{#1}}
\def\method{GMNN}
\icmltitlerunning{GMNN: Graph Markov Neural Networks}
\begin{document}

\twocolumn[
\icmltitle{GMNN: Graph Markov Neural Networks}

\icmlsetsymbol{equal}{*}

\begin{icmlauthorlist}
\icmlauthor{Meng Qu}{1,2}
\icmlauthor{Yoshua Bengio}{1,2,4}
\icmlauthor{Jian Tang}{1,4,3}
\end{icmlauthorlist}

\icmlaffiliation{1}{Mila - Qu\'ebec AI Institute}
\icmlaffiliation{2}{University of Montr\'eal}
\icmlaffiliation{3}{HEC Montr\'eal}
\icmlaffiliation{4}{Canadian Institute for Advanced Research (CIFAR)}

\icmlcorrespondingauthor{Meng Qu}{meng.qu@umontreal.ca}
\icmlcorrespondingauthor{Jian Tang}{jian.tang@hec.ca}

\icmlkeywords{Graph Neural Network, Statistical Relational Learning.}

\vskip 0.3in
]

\printAffiliationsAndNotice{}

\begin{abstract}

This paper studies semi-supervised object classification in relational data, which is a fundamental problem in relational data modeling. The problem has been extensively studied in the literature of both statistical relational learning (e.g. relational Markov networks) and graph neural networks (e.g. graph convolutional networks). Statistical relational learning methods can effectively model the dependency of object labels through conditional random fields for collective classification, whereas graph neural networks learn effective object representations for classification through end-to-end training. In this paper, we propose the Graph Markov Neural Network (\method{}) that combines the advantages of both worlds. A \method{} models the joint distribution of object labels with a conditional random field, which can be effectively trained with the variational EM algorithm. In the E-step, one graph neural network learns effective object representations for approximating the posterior distributions of object labels. In the M-step, another graph neural network is used to model the local label dependency. Experiments on object classification, link classification, and unsupervised node representation learning show that \method{} achieves state-of-the-art results. 

\end{abstract}

\section{Introduction}

We live in an interconnected world, where entities are connected through various relations. For example, web pages are linked by hyperlinks; social media users are connected through friendship relations. Modeling such relational data is an important topic in machine learning, covering a variety of applications such as entity classification~\cite{perozzi2014deepwalk}, link prediction~\cite{taskar2004link} and link classification~\cite{dettmers2018convolutional}.

Many of these applications can be boiled down to the fundamental problem of semi-supervised object classification~\cite{taskar2007relational}. Specifically, objects~\footnote{In this paper, we will use ``object'' and ``node'' interchangeably to refer to entities in graphs, because they are different terminologies used in the literature of statistical relational learning and graph neural networks.} are interconnected and associated with some attributes. Given the labels of a few objects, the goal is to infer the labels of other objects.
This problem has been extensively studied in the literature of statistical relational learning (SRL), which develops statistical methods to model relational data. Some representative methods include relational Markov networks (RMN)~\cite{taskar2002discriminative} and Markov logic networks (MLN)~\cite{richardson2006markov}. Generally, these methods model the dependency of object labels using conditional random fields~\cite{lafferty2001conditional}. Because of their effectiveness for modeling label dependencies, these methods achieve compelling results on semi-supervised object classification. However, several limitations still remain. (1) These methods typically define potential functions in conditional random fields as linear combinations of some hand-crafted feature functions, which are quite heuristic. Moreover, the capacity of such models is usually insufficient. (2) Due to the complexity of relational structures between objects, inferring the posterior distributions of object labels for unlabeled objects remains a challenging problem.

Another line of research is based on the recent progress of graph neural networks~\cite{kipf2016semi,hamilton2017inductive,gilmer2017neural,velivckovic2018graph}. Graph neural networks approach object classification by learning effective object representations with non-linear neural architectures, and the whole framework can be trained in an end-to-end fashion. For example, the graph convolutional network (GCN)~\cite{kipf2016semi} iteratively updates the representation of each object by combining its own representation and the representations of the surrounding objects. These approaches have been shown to achieve state-of-the-art performance because of their effectiveness in learning object representations on relational data. However, one critical limitation is that the labels of objects are independently predicted based on their representations. In other words, the joint dependency of object labels is ignored.

In this paper, we propose a new approach called the Graph Markov Neural Network (\method{}), which combines the advantages of both statistical relational learning and graph neural networks. A \method{} is able to learn effective object representations as well as model label dependency between different objects. Similar to SRL methods, a \method{} includes a conditional random field~\cite{lafferty2001conditional} to model the joint distribution of object labels conditioned on object attributes. This framework can be effectively and efficiently optimized with the variational EM framework~\cite{neal1998view}, alternating between an inference procedure (E-step) and a learning procedure (M-step). In the learning procedure, instead of maximizing the likelihood function, the training procedure for \method{}s optimizes the pseudolikelihood function~\cite{besag1975statistical} and parameterizes the local conditional distributions of object labels with a graph neural network. Such a graph neural network can well model the dependency of object labels, and no hand-crafted potential functions are required. For inference, since exact inference is intractable, we use a mean-field approximation~\cite{opper2001advanced}. Inspired by the idea of amortized inference~\cite{gershman2014amortized,kingma2013auto}, we further parameterize the posterior distributions of object labels with another graph neural network, which is able to learn useful object representations for predicting object labels. With a graph neural network for inference, the number of parameters can be significantly reduced, and the statistical evidence can be shared across different objects in inference~\cite{kingma2013auto}. 

Our \method{} approach is very general. Though it is designed for object classification, it can be naturally applied to many other applications, such as unsupervised node representation learning and link classification. Experiment results show that \method{}s achieve state-of-the-art results on object classification and unsupervised node representation learning, as well as very competitive results on link classification.

\section{Related Work}

\smallskip
\noindent \textbf{Statistical Relational Learning.} In the literature of statistical relational learning (SRL), a variety of methods have been proposed for semi-supervised object classification. The basic idea is to model label dependency with probabilistic graphical models. Many early methods~\cite{koller1998probabilistic,friedman1999learning,getoor2001learning,getoor2001probabilistic,xiang2008pseudolikelihood} are built on top of directed graphical models. However, these methods can only handle acyclic dependencies among objects, and their performance for prediction is usually limited. Due to such weaknesses, many later SRL methods employ Markov networks (e.g., conditional random fields~\cite{lafferty2001conditional}), and representative methods include relational Markov networks (RMN)~\cite{taskar2002discriminative} and Markov logic networks (MLN)~\cite{richardson2006markov,singla2005discriminative}. Though these methods are quite effective, they still suffer from several challenges. (1) Some hand-crafted feature functions are required for specifying the potential function, and the whole framework is designed as a log-linear model by combining different feature functions, so the capacity of such frameworks is quite limited. (2) Inference remains challenging due to the complicated relational structures among objects. Our proposed \method{} method overcomes the above challenges by using two different graph neural networks, one for modeling the label dependency and another for approximating the posterior label distributions, and the approach can be effectively trained with the variational EM algorithm. 

\smallskip
\noindent \textbf{Graph-based Semi-supervised Classification.} Another category of related work is graph-based semi-supervised classification. For example, the label propagation methods~\cite{zhu2003semi,zhou2004learning} iteratively propagate the label of each object to its neighbors. However, these methods can only model the linear dependency of object labels, while in our approach a non-linear graph neural network is used to model label dependency, which has greater expressive power. Moreover, \method{}s can also learn useful object representations for predicting object labels. 

\smallskip
\noindent \textbf{Graph Neural Networks.} Another closely related research area is that of graph neural networks~\cite{dai2016discriminative,kipf2016semi,hamilton2017inductive,gilmer2017neural,velivckovic2018graph}, which can learn useful object representations for predicting object labels. Essentially, the object representations are learned by encoding local graph structures and object attributes, and the whole framework can be trained in an end-to-end fashion.
Because of their effectiveness in learning object representations, they achieve state-of-the-art results in object classification. However, existing methods usually ignore the dependency between object labels. With \method{}s, besides learning object representations, we also model the joint dependency of object labels by introducing conditional random fields. 

\smallskip
\noindent \textbf{GNN for PGM Inference.} There are also some recent studies~\cite{yoon2018inference} using graph neural networks for inference in probabilistic graphical models. Compared with their work, our work focuses on statistical relational learning while their work puts more emphasis on standard graphical models. Moreover, our work utilizes two graph neural networks for both inference and learning, while their work only uses one graph neural network for inference.
\section{Problem Definition \& Preliminary}

\subsection{Problem Definition}

Relational data, in which objects are interconnected via different relations, are ubiquitous in the real world. Modeling relational data is an important direction in machine learning with various applications, such as object classification and link prediction. In this paper, we focus on a fundamental problem, semi-supervised object classification, as many other applications can be reformulated into this problem.

Formally, the problem of semi-supervised object classification considers a graph $G=(V,E,\mathbf{x}_V)$, in which $V$ is a set of objects, $E$ is a set of edges between objects, and $\mathbf{x}_V$ stands for the attributes of all the objects. The edges in $E$ may have multiple types, which represent different relations among objects. In this paper, for simplicity, we assume all edges belong to the same type. Given the labels $\mathbf{y}_L$ of a few labeled objects $L \subset V$, the goal is to predict the labels $\mathbf{y}_U$ for the remaining unlabeled objects $U = V \setminus L$.

This problem has been extensively studied in the literature of both statistical relation learning (SRL) and graph neural networks (GNN). Essentially, both types of methods aim to model the distribution of object labels conditioned on the object attributes and the graph structure, i.e. $p(\mathbf{y}_V|\mathbf{x}_V,E)$. Next, we introduce the general idea of both methods. For notation simplicity, we omit $E$ in the following formulas. 

\subsection{Statistical Relational Learning}
Most SRL methods model $p(\mathbf{y}_V|\mathbf{x}_V)$ with conditional random fields, which employ the following formulation:
\begin{equation}
\begin{aligned}
\label{eqn::obj-srl}
p(\mathbf{y}_V|\mathbf{x}_V)=\frac{1}{Z(\mathbf{x}_V)} \prod_{(n_i,n_j) \in E} \psi_{i,j}(\mathbf{y}_{n_i}, \mathbf{y}_{n_j},\mathbf{x}_V).
\end{aligned}
\end{equation}
Here, $(n_i,n_j)$ is an edge in graph $G$, and $\psi_{i,j}(\mathbf{y}_{n_i}, \mathbf{y}_{n_j},\mathbf{x}_V)$ is the potential score defined on the edge. Typically, the potential score is computed as a linear combination of some hand-crafted feature functions, such as logical formulae.

With this formulation, predicting the labels for unlabeled objects becomes an inference problem, i.e., inferring the posterior label distribution of the unlabeled objects $p(\mathbf{y}_U|\mathbf{y}_L,\mathbf{x}_V)$. Exact inference is usually infeasible due to the complicated structures between object labels. Therefore, some approximation inference methods are often utilized, such as loopy belief propagation~\cite{murphy1999loopy}.

\subsection{Graph Neural Network}
Different from SRL methods, GNN methods simply ignore the dependency of object labels and they focus on learning effective object representations for label prediction. Specifically, the joint distribution of labels is fully factorized as:
\begin{equation}
\begin{aligned}
\label{eqn::obj-gnn}
p(\mathbf{y}_V|\mathbf{x}_V)=\prod_{n \in V} p(\mathbf{y}_n|\mathbf{x}_V).
\end{aligned}
\end{equation}
Based on the formulation, GNNs will infer the label distribution $p(\mathbf{y}_n|\mathbf{x}_V)$ for each object $n$ independently. For each object $n$, GNNs predict the label in the following way:
\begin{equation}\nonumber
\begin{aligned}
\mathbf{h} = g(\mathbf{x}_V,E) \quad 
p(\mathbf{y}_n|\mathbf{x}_V) = \text{Cat}(\mathbf{y}_n|\text{softmax}(W \mathbf{h}_n)),
\end{aligned}
\end{equation}
where $\mathbf{h} \in \mathbb{R}^{|V| \times d}$ is the representations of all the objects, and $\mathbf{h}_n \in \mathbb{R}^d$ is the representation of object $n$. $W\in \mathbb{R}^{K \times d}$ is a linear transformation matrix, with $d$ as the representation dimension and $K$ as the number of label classes. $\text{Cat}$ stands for categorical distributions. Basically, GNNs focus on learning a useful representation $\mathbf{h}_n$ for each object $n$. Specifically, each $\mathbf{h}_n$ is initialized as the attribute representation of object $n$. Then each $\mathbf{h}_n$ is iteratively updated according to its current value and the representations of $n$'s neighbors, i.e. $\mathbf{h}_{\text{NB}(n)}$. For the updating function, the graph convolutional layer (GC)~\cite{kipf2016semi} and the graph attention layer (GAT)~\cite{velivckovic2018graph} can be used, or in general the neural message passing layer~\cite{gilmer2017neural} can be utilized. After multiple layers of update, the final object representations are fed into a linear softmax classifier for label prediction. The whole framework can be trained in an end-to-end fashion with a few labeled objects.

\section{GMNN: Graph Markov Neural Network}

\begin{figure}
	\centering
	\includegraphics[width=0.45\textwidth]{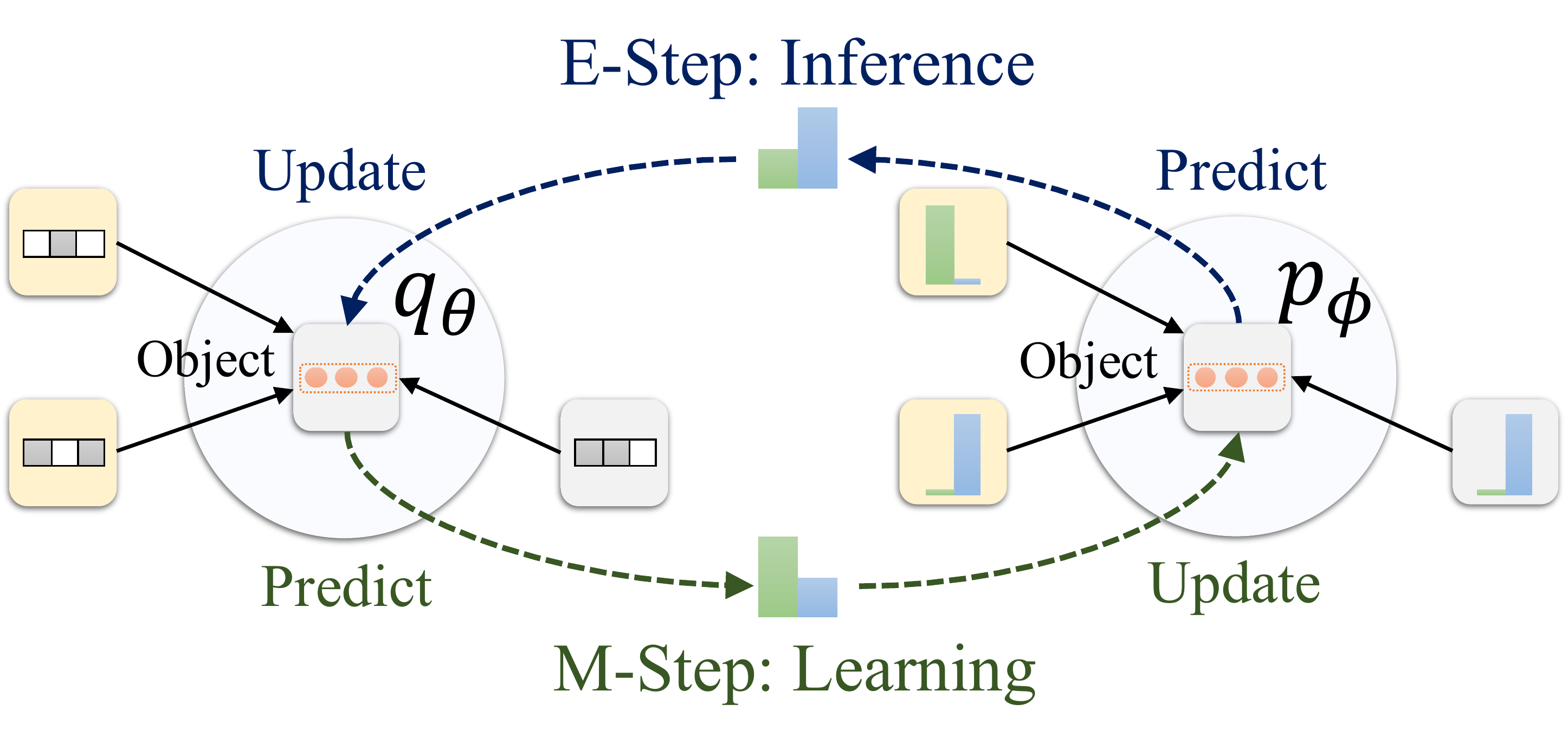}
	\caption{Framework overview. Yellow and grey squares are labeled and unlabeled objects. Grey/white grids are attributes. Histograms are label distributions of objects. Orange triple circles are object representations. \method{} is trained by alternating between an E-step and an M-step. See Sec.~\ref{sec::optim} for the detailed explanation.
	}
	\label{fig::framework}
\end{figure}

In this section, we introduce our approach called the Graph Markov Neural Network (GMNN) for semi-supervised object classification. The goal of GMNN is to combine the advantages of both the statistical relational learning methods and graph neural networks, such that we can learn useful objective representations for predicting object labels, as well as model the dependency between object labels. Specifically, GMNN models the joint distribution of object labels conditioned on object attributes, i.e. $p(\mathbf{y}_V|\mathbf{x}_V)$, by using a conditional random field, which is optimized with a pseudolikelihood variational EM framework. In the E-step, a graph neural network is used to learn object representations for label prediction. In the M-step, another graph neural network is employed to model the local dependency of object labels. Next, we introduce the details of the \method{} approach.

\subsection{Pseudolikelihood Variational EM}
Following existing SRL methods, we use a conditional random field as in Eq.~\eqref{eqn::obj-srl} to model the joint distribution of object labels conditioned on object attributes, i.e. $p_{\phi}(\mathbf{y}_V|\mathbf{x}_V)$, where the potential is defined over each edge, and $\phi$ is the model parameters. For now, we ignore the specific formulation of the potential function, and we will discuss it later.

We learn the model parameters $\phi$ by maximizing the log-likelihood function of the observed object labels, i.e. $\log p_{\phi}(\mathbf{y}_L|\mathbf{x}_V)$. However, directly maximizing the log-likelihood function is difficult, since many object labels are unobserved. Therefore, we instead optimize the evidence lower bound (ELBO) of the log-likelihood function:
\begin{equation}
\label{eqn::elbo}
\begin{aligned}
& \log p_\phi(\mathbf{y}_L|\mathbf{x}_V) \geq \\ & \mathbb{E}_{q_\theta(\mathbf{y}_U|\mathbf{x}_V)}[\log p_\phi(\mathbf{y}_L,\mathbf{y}_U|\mathbf{x}_V) - \log q_\theta(\mathbf{y}_U|\mathbf{x}_V)],
\end{aligned}
\end{equation}
where $q_\theta(\mathbf{y}_U|\mathbf{x}_V)$ can be any distributions over $\mathbf{y}_U$, and the equation holds when $q_\theta(\mathbf{y}_U|\mathbf{x}_V)=p_\phi(\mathbf{y}_U|\mathbf{y}_L,\mathbf{x}_V)$. According to the variational EM algorithm~\cite{neal1998view}, such a lower bound can be optimized by alternating between a variational E-step and an M-step. In the variational E-step (a.k.a., inference procedure), the goal is to fix $p_\phi$ and update the variational distribution $q_\theta(\mathbf{y}_U|\mathbf{x}_V)$ to approximate the true posterior distribution $p_\phi(\mathbf{y}_U|\mathbf{y}_L,\mathbf{x}_V)$.

In the M-step (a.k.a., learning procedure), we fix $q_\theta$ and update $p_\phi$ to maximize the likelihood function below:
\begin{equation}
\label{eqn::obj-p-likelihood}
\begin{aligned}
\ell(\phi) = \mathbb{E}_{q_\theta(\mathbf{y}_U|\mathbf{x}_V)}[\log p_\phi(\mathbf{y}_L,\mathbf{y}_U|\mathbf{x}_V)].
\end{aligned}
\end{equation}
However, directly optimizing the likelihood function can be difficult, as we have to deal with the partition function in $p_\phi$. To avoid computing the partition function, we instead optimize the pseudolikelihood function~\cite{besag1975statistical} below:
\begin{equation}
\label{eqn::obj-p-pseudo}
\begin{aligned}
\ell_{PL}(\phi) &\triangleq \mathbb{E}_{q_\theta(\mathbf{y}_U|\mathbf{x}_V)}[\sum_{n \in V} \log p_\phi(\mathbf{y}_n|\mathbf{y}_{V \setminus n},\mathbf{x}_V)] \\
&=\mathbb{E}_{q_\theta(\mathbf{y}_U|\mathbf{x}_V)}[\sum_{n \in V} \log p_\phi(\mathbf{y}_n|\mathbf{y}_{\text{NB(n)}},\mathbf{x}_V)],
\end{aligned}
\end{equation}
where $\text{NB}(n)$ is the neighbor set of $n$, and the equation is based on the independence properties of $p_{\phi}(\mathbf{y}_V|\mathbf{x}_V)$ derived from its formulation, i.e. Eq.~\eqref{eqn::obj-srl}. The pseudolikelihood approach is widely used for learning Markov networks~\cite{kok2005learning,richardson2006markov}.
Next, we introduce the details of the inference and learning steps.

\subsection{Inference}
The inference step aims to compute the posterior distribution $p_\phi(\mathbf{y}_U|\mathbf{y}_L,\mathbf{x}_V)$. Due to the complicated relational structures between object labels, exact inference is intractable. Therefore, we approximate it with another variational distribution $q_\theta(\mathbf{y}_U|\mathbf{x}_V)$. Specifically, we use the mean-field method~\cite{opper2001advanced}, in which $q_\theta$ is formulated as:
\begin{equation}
\begin{aligned}
q_\theta(\mathbf{y}_U|\mathbf{x}_V)=\prod_{n \in U} q_\theta(\mathbf{y}_n|\mathbf{x}_V).
\end{aligned}
\end{equation}
Here, $n$ is the index of unlabeled objects. In the variational distribution, all object labels are assumed to be independent.

To model the distribution of each object label in $q_\theta$, we follow the idea of amortized inference~\cite{gershman2014amortized,kingma2013auto}, and parameterize $q_\theta(\mathbf{y}_n|\mathbf{x}_V)$ with a graph neural network (GNN), which learns effective object representations for label prediction:
\begin{equation}
\begin{aligned}
q_\theta(\mathbf{y}_n|\mathbf{x}_V)= \text{Cat}(\mathbf{y}_n|\text{softmax}(W_\theta\mathbf{h}_{\theta,n})).
\end{aligned}
\end{equation}
Specifically, $q_\theta(\mathbf{y}_n|\mathbf{x}_V)$ is formulated as a categorical distribution, and the probability of each class is calculated by a softmax classifier based on the object representation $\mathbf{h}_{\theta,n}$. The representation $\mathbf{h}_{\theta,n}$ is learned by a GNN model with the object attributes $\mathbf{x}_V$ as features, and $\theta$ as parameters. We denote the GNN model as $\text{GNN}_\theta$. With $\text{GNN}_\theta$, we can improve inference by learning useful representations of objects from their attributes and local connections. Besides, by sharing $\text{GNN}_\theta$ across different objects, we can significantly reduce the number of parameters required for inference, which is more efficient~\cite{kingma2013auto}.

With the above mean-field formulation, if we fix distribution $q_\theta(\mathbf{y}_{\text{NB}(n) \cap U}|\mathbf{x}_V)$, then the optimum of $q_\theta(\mathbf{y}_n|\mathbf{x}_V)$, denoted by $q^*(\mathbf{y}_n|\mathbf{x}_V)$, is specified by the following condition:
\begin{equation}
\begin{aligned}
\label{eqn::optimal-condition}
&\log q^*(\mathbf{y}_n|\mathbf{x}_V) =\\ & \mathbb{E}_{ q_\theta(\mathbf{y}_{\text{NB}(n) \cap U}|\mathbf{x}_V)}[\log p_\phi(\mathbf{y}_n|\mathbf{y}_{\text{NB}(n)},\mathbf{x}_V)]
+ \text{const}.
\end{aligned}
\end{equation}
See appendix for the proof. Bases on that, for each object $n$, we optimize $q_\theta(\mathbf{y}_n|\mathbf{x}_V)$ with a method similar to~\citet{salakhutdinov2010efficient}. More specifically, we start by using $q_\theta(\mathbf{y}_{\text{NB}(n) \cap U}|\mathbf{x}_V)$ to compute $q^*(\mathbf{y}_n|\mathbf{x}_V)$, which is further treated as target to update $q_\theta(\mathbf{y}_n|\mathbf{x}_V)$. Computing $q^*(\mathbf{y}_n|\mathbf{x}_V)$ in Eq.~\eqref{eqn::optimal-condition} relies on computing the expectation with respect to $q_\theta(\mathbf{y}_{\text{NB}(n) \cap U}|\mathbf{x}_V)$. We estimate the expectation by drawing a sample from $q_\theta(\mathbf{y}_{\text{NB}(n) \cap U}|\mathbf{x}_V)$, yielding:
\begin{equation}
\begin{aligned}
\label{eqn::optimal-condition2}
&\mathbb{E}_{ q_\theta(\mathbf{y}_{\text{NB}(n) \cap U}|\mathbf{x}_V)}[\log p_\phi(\mathbf{y}_n|\mathbf{y}_{\text{NB}(n)},\mathbf{x}_V)] \\
&\simeq \log p_\phi(\mathbf{y}_n|\mathbf{\hat{y}}_{\text{NB}(n)},\mathbf{x}_V).
\end{aligned}
\end{equation}
In the above formula, $\mathbf{\hat{y}}_{\text{NB}(n)}=\{ \mathbf{\hat{y}}_{n'} \}_{n' \in \text{NB}(n)} $ is defined as below. For each unlabeled neighbor $n'$ of object $n$, we sample $\mathbf{\hat{y}}_{n'} \sim q_\theta(\mathbf{y}_{n'}|\mathbf{x}_V)$, and for each labeled neighbor $n'$ of object $n$, $\mathbf{\hat{y}}_{n'}$ is set as the ground-truth label. In practice, we find that using one sample from $q_\theta(\mathbf{y}_{\text{NB}(n) \cap U}|\mathbf{x}_V)$ yields comparable results with multiple samples. Therefore, in the experiments, only one sample is used for efficiency purpose. 

Given Eq.~\eqref{eqn::optimal-condition} and~\eqref{eqn::optimal-condition2}, $q^*(\mathbf{y}_n|\mathbf{x}_V)$ can thus be approximated as $q^*(\mathbf{y}_n|\mathbf{x}_V) \approx p_\phi(\mathbf{y}_n|\mathbf{\hat{y}}_{\text{NB}(n)},\mathbf{x}_V)$. Therefore, we could instead treat $p_\phi(\mathbf{y}_n|\mathbf{\hat{y}}_{\text{NB}(n)},\mathbf{x}_V)$ as target, and minimize $\text{KL}(p_\phi(\mathbf{y}_n|\mathbf{\hat{y}}_{\text{NB}(n)},\mathbf{x}_V) || q_\theta(\mathbf{y}_n|\mathbf{x}_V))$. We further use a parallel update strategy~\cite{koller2009probabilistic} to speed up training, where we jointly optimize $q_\theta(\mathbf{y}_n|\mathbf{x}_V)$ for every unlabeled object $n$, yielding the objective as follows:
\vspace{-0.1cm}
\begin{equation}
\label{eqn::obj-q-u}
\begin{aligned}
O_{\theta,U} = \sum_{n \in U} \mathbb{E}_{p_\phi(\mathbf{y}_n|\mathbf{\hat{y}}_{\text{NB}(n)},\mathbf{x}_V)}[\log q_\theta(\mathbf{y}_n|\mathbf{x}_V)].
\end{aligned}
\end{equation}
Besides, we notice that $q_\theta$ can be also trained by predicting the labels for the labeled objects. Therefore, we also let $q_\theta$ maximize the following supervised objective function:
\vspace{-0.1cm}
\begin{equation}
\label{eqn::obj-q-l}
\begin{aligned}
O_{\theta,L} = \sum_{n \in L} \log q_\theta(\mathbf{y}_n|\mathbf{x}_V).
\end{aligned}
\end{equation}
Here, $\mathbf{y}_n$ is the ground-truth label of $n$.
By adding Eq.~\eqref{eqn::obj-q-u} and~\eqref{eqn::obj-q-l}, we obtain the overall objective for optimizing $\theta$:
\begin{equation}
\label{eqn::obj-q}
\begin{aligned}
O_{\theta} = O_{\theta,U} + O_{\theta,L}.
\end{aligned}
\end{equation}
\subsection{Learning}
In the M-step, we seek to learn the parameter $\phi$. More specifically, we will fix $q_\theta$ 
and further update $p_\phi$ to maximize Eq.~\eqref{eqn::obj-p-pseudo}. With the objective function, we notice that only the conditional distribution $p_\phi(\mathbf{y}_n|\mathbf{y}_{\text{NB}(n)},\mathbf{x}_V)$ is required for $p_\phi$ in both the inference and learning steps (Eq.~\eqref{eqn::obj-q-u} and (\ref{eqn::obj-p-pseudo})). Therefore, instead of defining the joint distribution of object labels $p_\phi(\mathbf{y}_V|\mathbf{x}_V)$ by specifying the potential function, we can simply focus on modeling the conditional distribution. Here, we parameterize the conditional distribution  $p_\phi(\mathbf{y}_n|\mathbf{y}_{\text{NB}(n)},\mathbf{x}_V)$ with another non-linear graph neural network model (GNN) because of its effectiveness:
\begin{equation}
\begin{aligned}
p_\phi(\mathbf{y}_n|\mathbf{y}_{\text{NB}(n)},\mathbf{x}_V)=\text{Cat}(\mathbf{y}_n|\text{softmax}(W_\phi\mathbf{h}_{\phi,n})).
\end{aligned}
\end{equation}
Here, the distribution of $\mathbf{y}_n$ is characterized by a softmax classifier, which takes the object representation $\mathbf{h}_{\phi,n}$ learned by a GNN model as features, and we denote the GNN as $\text{GNN}_\phi$. When learning the object representation $\mathbf{h}_{\phi,n}$, $\text{GNN}_\phi$ treats all the labels $\mathbf{y}_{\text{NB}(n)}$ surrounding the object $n$ as features. Therefore, $\text{GNN}_\phi$ essentially models local dependencies of object labels. With the above formulation, we no longer require any hand-crafted feature functions. 

The framework is related to the label propagation methods~\cite{zhu2003semi,zhou2004learning}, which also update each object label by combining the surrounding labels. However, these methods propagate labels in a fixed and linear way, whereas $\text{GNN}_\phi$ is in a learnable and non-linear way.

One notable thing is that when defining $p_\phi(\mathbf{y}_n|\mathbf{y}_{\text{NB}(n)},\mathbf{x}_V)$, $\text{GNN}_\phi$ only uses the object labels $\mathbf{y}_{\text{NB}(n)}$ surrounding the object $n$ as features, but $\text{GNN}_\phi$ is flexible to incorporate other features. For example, we can follow existing SRL methods, and take both the surrounding object labels $\mathbf{y}_{\text{NB}(n)}$ and surrounding attributes $\mathbf{x}_{\text{NB}(n)}$ as features in $\text{GNN}_\phi$. We will discuss this variant in our experiment (see Sec.~\ref{sec::experiment-main}).

Another thing is that based on the overall formulation of $p_\phi$, i.e. Eq.~\eqref{eqn::obj-srl}, each object label $\mathbf{y}_n$ should only depend on its adjacent object labels $\mathbf{y}_{\text{NB}(n)}$ and object attributes $\mathbf{x}_V$, which implies $\text{GNN}_\phi$ should not have more than one message passing layer. However, a common practice in the literature of graph neural networks is to use multiple message passing layers during training, which can well model the long-range dependency between different objects. Therefore, we also explore using multiple message passing layers to capture such long-range dependency (see Sec.~\ref{sec::experiment-learn}).

When optimizing $p_\phi$ to maximize Eq.~\eqref{eqn::obj-p-pseudo}, we estimate the expectation in Eq.~\eqref{eqn::obj-p-pseudo} by drawing a sample from $q_\theta(\mathbf{y}_U|\mathbf{x}_V)$. More specifically, if $n$ is an unlabeled object, then we sample $\mathbf{\hat{y}}_n \sim q_\theta(\mathbf{y}_n|\mathbf{x}_V)$, and otherwise we set $\mathbf{\hat{y}}_n$ as the ground-truth label. Afterwards, the parameter $\phi$ can be optimized by maximizing the following objective function:
\begin{equation}
\label{eqn::obj-p}
\begin{aligned}
O_\phi=\sum_{n \in V} \log p_\phi(\mathbf{\hat{y}}_n|\mathbf{\hat{y}}_{\text{NB(n)}},\mathbf{x}_V).
\end{aligned}
\end{equation}

\begin{algorithm}[bt]
    \caption{Optimization Algorithm}
    \label{alg::optim}
    \begin{algorithmic}
        \STATE {\bfseries Input:} A graph $G$, some labeled objects $(L,\mathbf{y}_L)$.
        \STATE {\bfseries Output:} Object labels $\mathbf{y}_U$ for unlabeled objects $U$.
        \STATE Pre-train $q_\theta$ with $\mathbf{y}_L$ according to Eq.~\eqref{eqn::obj-q-l}.
        \WHILE{not converge}
            \STATE $\boxdot$ \emph{M-Step: Learning Procedure}
            \STATE Annotate unlabeled objects with $q_\theta$.
            \STATE Denote the sampled labels as $\mathbf{\hat{y}}_U$.
            \STATE Set $\mathbf{\hat{y}}_V=(\mathbf{y}_L,\mathbf{\hat{y}}_U)$ and update $p_\phi$ with Eq.~\eqref{eqn::obj-p}.
            \STATE $\boxdot$ \emph{E-Step: Inference Procedure}
            \STATE Annotate unlabeled objects with $p_\phi$ and $\mathbf{\hat{y}}_V$.
            \STATE Denote the predicted label distribution as $p_\phi(\mathbf{y}_U)$.
            \STATE Update $q_\theta$ with Eq.~\eqref{eqn::obj-q-u},~\eqref{eqn::obj-q-l} based on $p_\phi(\mathbf{y}_U), \mathbf{y}_L$.
        \ENDWHILE
        \STATE Classify each unlabeled object $n$ based on $q_\theta(\mathbf{y}_n|\mathbf{x}_V)$.
\end{algorithmic}
\end{algorithm}

\vspace{-0.5cm}
\subsection{Optimization}
\label{sec::optim}
To optimize our approach, we pre-train $q_\theta$ with the labeled objects. Then we alternatively optimize $p_\phi$ and $q_\theta$ until convergence. Afterwards, both $p_\phi$ and $q_\theta$ can be used to classify unlabeled objects. In practice, $q_\theta$ consistently outperforms $p_\phi$, and thus we use $q_\theta$ to infer object labels by default. We summarize the detailed optimization algorithm in Alg.~\ref{alg::optim}. 

Fig.~\ref{fig::framework} presents an illustration of the framework. For the central object, $q_\theta$ uses the attributes of its surrounding objects to learn its representation, and further predicts the label. In contrast, $p_\phi$ utilizes the labels of the surrounding objects as features. If a neighbor is unlabeled, we simply use a label sampled from $q_\theta$ instead. In the E-step, $p_\phi$ predicts the label for the central object, which is then treated as target to update $q_\theta$. In the M-step, $q_\theta$ predicts the label for the central object, which serves as the target data to update $p_\phi$.

\section{Application}

Besides semi-supervised object classification, our approach can also be naturally extended to many other tasks. In this section, we choose two applications for demonstration. 

\subsection{Unsupervised Node Representation Learning}

Though \method{} is designed to learn node representations for semi-supervised node classification, it can be naturally extended to learn node representation without any labeled nodes. In this case, as there are no labeled nodes, we instead predict the neighbors for each node. In this way, the neighbors of each node are treated as the pseudo labels, and this idea is widely used in existing studies~\cite{perozzi2014deepwalk,tang2015line,grover2016node2vec}. Based on that, we first train the inference network $q_\theta$ with the pseudo label of each node, then we continue training with Alg.~\ref{alg::optim}, in which all the nodes labels are treated as unobserved variables. In the E-step, we infer the neighbor distribution for each node with $q_\theta$ by optimizing Eq.~\eqref{eqn::obj-q-u}, during which $p_\phi$ encourages the inferred neighbor distributions to be locally smooth. In the M-step, we update $p_\phi$ with Eq.~\eqref{eqn::obj-p} to model the local dependency of the inferred neighbor distributions.

The inference network $q_\theta$ in the above method shares similar ideas with existing methods for unsupervised node representation learning~\cite{perozzi2014deepwalk,tang2015line,grover2016node2vec}, since they all seek to predict the neighbors for each node. However, we also use $p_\phi$ to ensure that the neighbor distributions inferred by $q_\theta$ are locally smooth, and thus $p_\phi$ serves as a regularizer to improve $q_\theta$.

\subsection{Link Classification}

\method{} can be also applied to link classification. Given a few labeled links, the goal is to classify the remaining links.

Following the idea in existing SRL studies~\cite{taskar2004link}, we reformulate link classification as an object classification problem. Specifically, we construct a line graph $\tilde{G}$ from the original object graph $G$. The object set $\tilde{V}$ in the line graph corresponds to the link set $E$ in the original graph. Two objects are linked in the line graph if their corresponding links in the original graph share a node. The attributes of each node in the line graph is defined as the nodes of the corresponding link in the original graph. 

Based on that, the link classification task on the original graph is formulated as the object classification task on the line graph. Therefore, we can apply our object classification approach to the line graph for solving the original problem.

\section{Experiment}

\begin{table*}[bht]
    \vspace{-0.2cm}
	\caption{Dataset statistics. OC, NRL, LC represent object classification, node representation learning and link classification respectively.}
	\label{tab::dataset}
	\begin{center}
	\scalebox{0.75}
	{
		\begin{tabular}{c c c c c c c c c}\hline
		    \textbf{Dataset} & \textbf{Task} & \textbf{\# Nodes} & \textbf{\# Edges} & \textbf{\# Features} & \textbf{\# Classes} & \textbf{\# Training} & \textbf{\# Validation} & \textbf{\# Test}  \\
	        \hline
	        Cora & OC / NRL & 2,708 & 5,429 & 1,433 & 7 & 140 & 500 & 1,000\\
	        Citeseer & OC / NRL & 3,327 & 4,732 & 3,703 & 6 & 120 & 500 & 1,000 \\
	        Pubmed & OC / NRL & 19,717 & 44,338 & 500 & 3 & 60 & 500 & 1,000\\
	        Bitcoin Alpha & LC & 3,783 & 24,186 & 3,783 & 2 & 100 & 500 & 3,221\\
	        Bitcoin OTC & LC & 5,881 & 35,592 & 5,881 & 2 & 100 & 500 & 5,947\\
	        \hline
	    \end{tabular}
	}
	\end{center}
	\vspace{-0.5cm}
\end{table*}

\begin{table}[bht]
    \vspace{-0.2cm}
	\caption{Results of object classification (\%). [*] means the results are taken from the corresponding papers.} 
	\label{tab::results-object}
	\begin{center}
	\scalebox{0.75}
	{
		\begin{tabular}{C{1.4cm} C{2.3cm} C{1.5cm} C{1.5cm} C{1.5cm}}\hline
		    \textbf{Category}	& \textbf{Algorithm}	& \textbf{Cora} & \textbf{Citeseer} & \textbf{Pubmed} \\ 
		    \hline
		    \multirow{1}{*}{\textbf{SSL}} 
		    & LP & 74.2 & 56.3 & 71.6 \\
		    \hline
		    \multirow{3}{*}{\textbf{SRL}} 
		    & PRM & 77.0 & 63.4 & 68.3 \\
		    & RMN & 71.3 & 68.0 & 70.7 \\
		    & MLN & 74.6 & 68.0 & 75.3 \\
		    \hline
		    \multirow{3}{*}{\textbf{GNN}} 
		    & Planetoid * & 75.7 & 64.7 & 77.2 \\
		    & GCN * & 81.5 & 70.3 & 79.0 \\
		    & GAT * & 83.0 & 72.5 & 79.0 \\
		    \hline
		    \multirow{3}{*}{\textbf{\method{}}}
		    & W/o Attr. in $p_\phi$ & 83.4 & 73.1 & 81.4 \\
		    & With Attr. in $p_\phi$ & \textbf{83.7} & 72.9 & 81.8 \\
		    & Best results & \textbf{83.7} & \textbf{73.6} & \textbf{81.9} \\
		    \hline
	    \end{tabular}
	}
	\end{center}
	\vspace{-0.5cm}
\end{table}

\begin{table}[bht]
	\caption{Results of unsupervised node representation learning (\%). [*] means the results are taken from corresponding papers.}
	\label{tab::results-unsup}
	\begin{center}
	\scalebox{0.75}
	{
		\begin{tabular}{C{1.4cm} C{2.3cm} C{1.5cm} C{1.5cm} C{1.5cm}}\hline
		    \textbf{Category}	& \textbf{Algorithm}	& \textbf{Cora} & \textbf{Citeseer} & \textbf{Pubmed} \\ 
		    \hline
		    \multirow{2}{*}{\textbf{GNN}} 
		    & DeepWalk * & 67.2 & 43.2 & 65.3 \\
		    & DGI * & 82.3 & \textbf{71.8} & 76.8 \\
		    \hline
		    \multirow{2}{*}{\textbf{\method{}}} 
		    & With only $q_{\theta}$ & 78.1 & 68.0 & 79.3 \\
		    & With $q_{\theta}$ and $p_{\phi}$ & \textbf{82.8} & 71.5 & \textbf{81.6} \\
		    \hline
	    \end{tabular}
	}
	\end{center}
	\vspace{-0.5cm}
\end{table}

In this section, we evaluate the performance of \method{} on three tasks, including object classification, unsupervised node representation learning, and link classification. 

\subsection{Datasets and Experiment Settings}

For object classification, we follow existing studies~\cite{yang2016revisiting,kipf2016semi,velivckovic2018graph} and use three benchmark datasets from~\citet{sen2008collective} for evaluation, including Cora, Citeseer, Pubmed. In each dataset, 20 objects from each class are treated as labeled objects, and we use the same data partition as in~\citet{yang2016revisiting}. Accuracy is used as the evaluation metric.

For unsupervised node representation learning, we also use the above three datasets, in which objects are treated as nodes. We learn node representations without using any labeled nodes. To evaluate the learned representations, we follow~\citet{velickovic2018deep} and treat the representations as features to train a linear classifier on the labeled nodes. Then we classify the test nodes and report the accuracy. Note that we use the same data partition as in object classification. 

For link classification, we construct two datasets from the Bitcoin Alpha and the Bitcoin OTC datasets~\cite{kumar2016edge,kumar2018rev2} respectively. The datasets contain graphs between Bitcoin users, and the weight of a link represents the trust degree of connected users. We treat links with weights greater than 3 as positive instances, and links with weights less than -3 are treated as negative ones. Given a few labeled links, We try to classify the test links. As the positive and negative links are quite unbalanced, we report the F1 score. 

\subsection{Compared Algorithms}

\smallskip
\noindent \textbf{GNN Methods}. For object classification and link classification, we mainly compare with the recently-proposed Graph Convolutional Network~\cite{kipf2016semi} and Graph Attention Network~\cite{velivckovic2018graph}. However, GAT cannot scale up to both datasets in the link classification task, so the performance is not reported. For unsupervised node representation learning, we compare with Deep Graph Infomax~\cite{velickovic2018deep}, which is the state-of-the-art method. Besides, we also compare with DeepWalk~\cite{perozzi2014deepwalk} and Planetoid~\cite{yang2016revisiting}.

\smallskip
\noindent \textbf{SRL Methods}. For SRL methods, we compare with the Probabilistic Relational Model~\cite{koller1998probabilistic}, the Relational Markov Network~\cite{taskar2002discriminative} and the Markov Logic Network~\cite{richardson2006markov}. In the PRM method, we assume that the label of an object depends on the attributes of itself, its adjacent objects, and the labels of its adjacent objects. For the RMN and MLN methods, we follow~\citet{taskar2002discriminative} and use a logistic regression model locally for each object. This logistic regression model takes the attributes of each object and also those of its neighbors as features. Besides, we treat the labels of two linked objects as a clique template, which is the same as in~\citet{taskar2002discriminative}. In RMN, a complete score table is employed for modeling label dependency, which maintains a potential score for every possible combination of object labels in a clique. In MLN, we simply use one indicator function in the potential function, and the indicator function judges whether the objects in a clique have the same label. Loop belief propagation~\cite{murphy1999loopy} is used for approximation inference in RMN and MLN.

\smallskip
\noindent \textbf{SSL Methods}. For methods under the category of graph-based semi-supervised classification, we choose the label propagation method~\cite{zhou2004learning} to compare with.

\subsection{Parameter Settings}

\smallskip
\noindent \textbf{Object Classification.} For \method{}, $p_\phi$ and $q_\theta$ are composed of two graph convolutional layers with 16 hidden units and ReLU activation~\cite{nair2010rectified}, followed by the softmax function, as in~\citet{kipf2016semi}. We reweight each feature of a node to 1 if the original weight is greater than 0. Dropout~\cite{srivastava2014dropout} is applied to the network inputs with $p=0.5$. We use the RMSProp optimizer~\cite{tieleman2012lecture} during training, with the initial learning rate as 0.05 and weight decay as 0.0005. In each iteration, both networks are trained for 100 epochs. The mean accuracy over 100 runs is reported in experiment. 

\smallskip
\noindent \textbf{Unsupervised Node Representation Learning.} For the \method{} approach, $p_\phi$ and $q_\theta$ are composed of two graph convolutional layers followed by a linear layer and the softmax function. The dimension of hidden layers is set as 512 for Cora and Citeseer, and 256 for Pubmed, which are the same as in~\citet{velickovic2018deep}. ReLU~\cite{nair2010rectified} is used as the activation function. Each node feature is reweighted to 1 if the original weight is larger than 0. We apply dropout~\cite{srivastava2014dropout} to the inputs of both networks with $p=0.5$. The Adam SGD optimizer~\cite{kingma2014adam} is used for training, with initial learning rate as 0.1 and weight decay as 0.0005. We empirically train $q_\theta$ for 200 epoches during pre-training. Afterwards, we train both $p_\phi$ and $q_\theta$ for 2 iterations, with 100 epochs for each network per iteration. For the Pubmed dataset, we transform the raw node features into binary value since it can result in better performance. The mean accuracy over 50 runs is reported. 

\smallskip
\noindent \textbf{Link Classification.} The setting of \method{} in this task is similar as in object classification, with the following differences. The dimension of the hidden layers is set as 128. No weight decay and dropout are used. In each iteration, both networks are trained for 5 epochs with the Adam optimizer~\cite{kingma2014adam}, and the learning rate is 0.01.

\subsection{Results}

\smallskip
\noindent \textbf{1. Comparison with the Baseline Methods.}
\label{sec::experiment-main}
The quantitative results on the three tasks are presented in Tab.~\ref{tab::results-object},~\ref{tab::results-unsup},~\ref{tab::results-link} respectively.
For object classification, \method{} significantly outperforms all the SRL methods. The performance gain is from two folds. First, during inference, \method{} employs a GNN model, which can learn effective object representations to improve inference. Second, during learning, we model the local label dependency with another GNN, which is more effective compared with SRL methods. \method{} is also superior to the label propagation method, as \method{} is able to use object attributes and propagate labels in a non-linear way. Compared with GCN, which employs the same architecture as the inference network in \method{}, \method{} significantly outperforms GCN, and the performance gain mainly comes from the capability of modeling label dependencies. Besides, \method{} also outperforms GAT, but their performances are quite close. This is because GAT utilizes a much more complicated architecture. Since GAT is less efficient, it is not used in \method{}, but we anticipate the results can be further improved by using GAT, and we leave it as future work. In addition, by incorporating the object attributes in the learning network $p_\phi$, we further improve the performance, showing that GMNN is flexible and also effective to incorporate additional features in the learning network. For link classification, we obtain similar results.

For unsupervised node representation learning, \method{} achieves state-of-the-art results on the Cora and Pubmed datasets. The reason is that \method{} effectively models the smoothness of the neighbor distributions for different nodes with the $p_{\phi}$ network. Besides, the performance of \method{} is quite close to the performance in the semi-supervised setting (Tab.~\ref{tab::results-object}), showing that the learned representations are quite effective. We also compare with a variant without using the $p_{\phi}$ network (with only $q_{\theta}$). In this case, we see that the performance drops significantly, showing the importance of using $p_{\phi}$ as a regularizer over the neighbor distributions.

\begin{table}[bht]
	\caption{Results of link classification (\%).}
	\label{tab::results-link}
	\begin{center}
	\scalebox{0.75}
	{
		\begin{tabular}{C{1.4cm} c C{2.3cm} C{2.3cm}}\hline
		    \textbf{Category}	& \textbf{Algorithm}	& \textbf{Bitcoin Alpha} & \textbf{Bitcoin OTC}  \\ 
		    \hline
		    \multirow{1}{*}{\textbf{SSL}} 
		    & LP & 59.68 & 65.58 \\
		    \hline
		    \multirow{3}{*}{\textbf{SRL}} 
		    & PRM & 58.59 & 64.37 \\
		    & RMN & 59.56 & 65.59 \\
		    & MLN & 60.87 & 65.62 \\
		    \hline
		    \multirow{2}{*}{\textbf{GNN}} 
		    & DeepWalk & 62.71 & 63.20 \\
		    & GCN & 64.00 & 65.69 \\
		    \hline
		    \multirow{2}{*}{\textbf{\method{}}} 
		    & W/o Attr. in $p_\phi$ & 65.59 & 66.62 \\
		    & With Attr. in $p_\phi$ & \textbf{65.86} & \textbf{66.83} \\
		    \hline
	    \end{tabular}
	}
	\end{center}
\end{table}

\smallskip
\noindent \textbf{2. Analysis of the Amortized Inference.}
In \method{}, we employ amortized inference, and parameterize the posterior label distribution by using a GNN model. In this section, we thoroughly look into this strategy, and present some analysis in Tab.~\ref{tab::results-amortized}. Here, the variant ``Non-amortized'' simply models each $q_\theta(\mathbf{y}_n|\mathbf{x}_V)$ as a categorical distribution with independent parameters, and performs fix-point iteration (i.e. Eq.~\eqref{eqn::optimal-condition}) to calculate the value. We see that the performance of this variant is very poor on all datasets. By parameterizing the posterior distribution as a neural network, which leverages the own attributes of each object for inference, the performance (see ``1 Linear Layer'') is significantly improved, but still not satisfactory. With several GC layers, we are able to incorporate the attributes from the surrounding neighbors for each object, yielding further significant improvement. The above observations prove the effectiveness of our strategy for inferring the posterior label distributions.

\begin{table}[bht]
	\caption{Analysis of amortized inference (\%).}
	\label{tab::results-amortized}
	\begin{center}
	\scalebox{0.8}
	{
		\begin{tabular}{c C{1.3cm} C{1.3cm} C{1.3cm}}\hline
		    \textbf{Architecture}	& \textbf{Cora} & \textbf{Citeseer} & \textbf{Pubmed} \\ 
		    \hline
		    Non-amortized & 45.3 & 28.1 & 42.2  \\
		    1 Linear Layer & 55.8 & 57.5 & 69.8  \\
		    1 GC Layer & 72.9 & 67.6 & 71.8  \\
		    2 GC Layers & 83.4& 73.1 & 81.4 \\
		    3 GC Layers & 82.0 & 70.6 & 80.7 \\
		    \hline
	    \end{tabular}
	}
	\end{center}
\end{table}

\smallskip
\noindent \textbf{3. Ablation Study of the Learning Network.}
\label{sec::experiment-learn}
In \method{}, the conditional distribution $p_\phi(\mathbf{y}_n|\mathbf{y}_{\text{NB(n)}},\mathbf{x}_V)$ is parameterized as another GNN, which essentially models the local label dependency. In this section, we compare different architectures of the GNN on the object classification task, and the results are presented in Tab.~\ref{tab::results-learn}. Here, the variant ``1 Mean Pooling Layer'' computes the distribution of $\mathbf{y}_n$ as the linear combination of $\{ \mathbf{y}_{n'} \}_{n' \in \text{NB(n)}}$. This variant is similar to label propagation methods, and its performance is quite competitive. However, the weights of different neighbors during propagation are fixed. By parameterizing the conditional distribution with several GC layers, we are able to automatically learn the propagation weights, and thus obtain superior results on all datasets. This observation proves the effectiveness of employing GNNs in the learning procedure.

\begin{table}[bht]
	\caption{Ablation study of the learning network (\%).}
	\label{tab::results-learn}
	\begin{center}
	\scalebox{0.8}
	{
		\begin{tabular}{c C{1.3cm} C{1.3cm} C{1.3cm}}\hline
		    \textbf{Architecture}	& \textbf{Cora} & \textbf{Citeseer} & \textbf{Pubmed} \\ 
		    \hline
		    1 Mean Pooling Layer & 82.4 & 71.9 & 80.7  \\
		    1 GC Layer & 83.1 & 73.1 & 80.9 \\
		    2 GC Layers & 83.4& 73.1 & 81.4 \\
		    3 GC Layers & 83.6 & 73.0 & 81.5 \\
		    \hline
	    \end{tabular}
	}
	\end{center}
	\vspace{-0.2cm}
\end{table}

\smallskip
\noindent \textbf{4. Convergence Analysis.}
In \method{}, we utilize the variational EM algorithm for optimization, which consists of an E-step and an M-step in each iteration. Next, we analyze the convergence of \method{}. We take the Cora and Citeseer datasets on object classification as examples, and report the validation accuracy of both the $q_\theta$ and $p_\phi$ networks at each iteration. Fig.~\ref{fig::conv} presents the convergence curve, in which iteration 0 corresponds to the pre-training stage. \method{} takes only few iterations to convergence, which is very efficient.

\begin{figure}[htb!]
    \vspace{-0.5cm}
	\centering
	\scalebox{1.0}{
	\subfigure[Cora]{
		\label{fig::conv-cora}
		\includegraphics[width=0.24\textwidth]{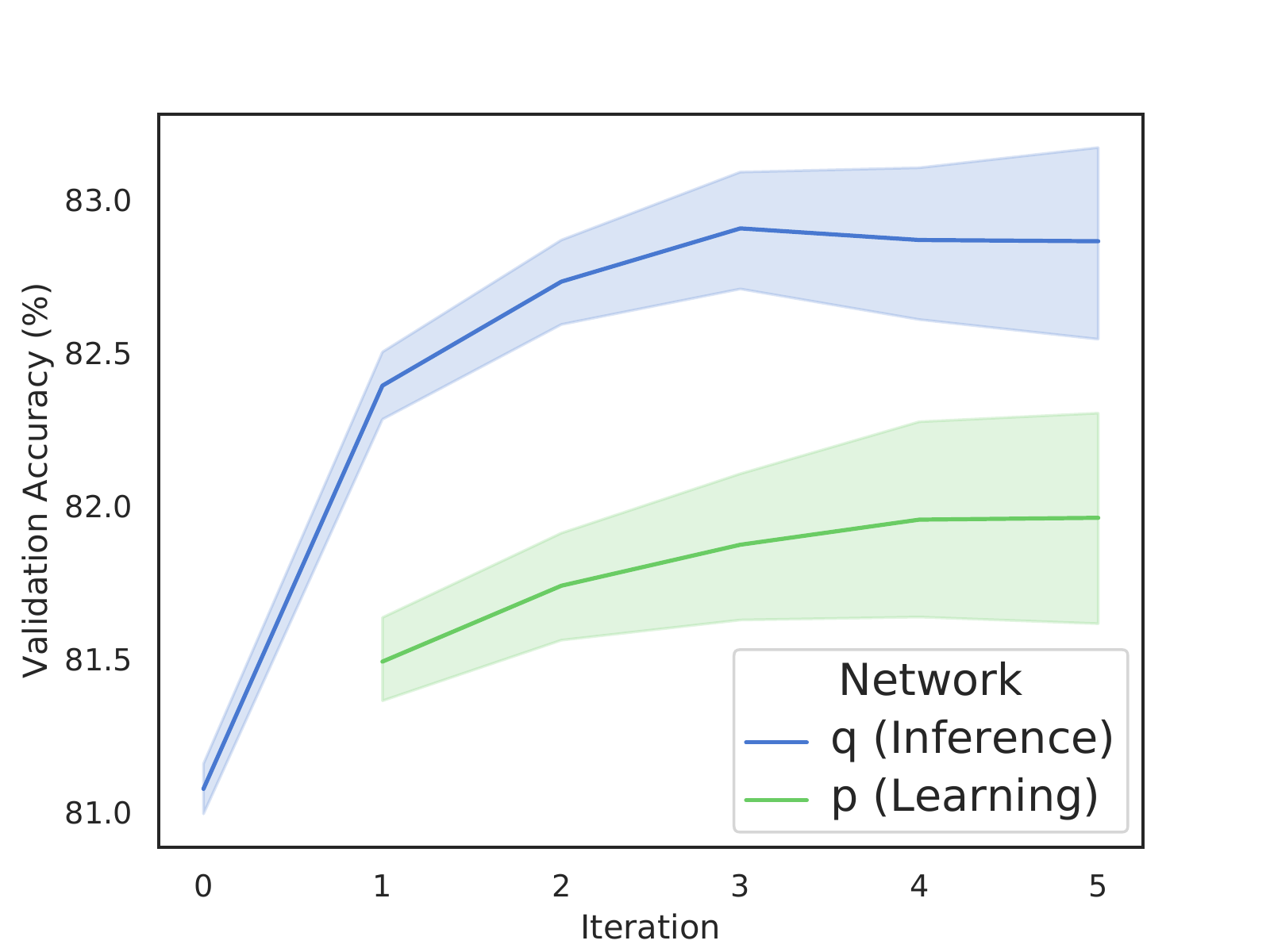}
	}
	\hspace{-0.25cm}
	\subfigure[Citeseer]{
		\label{fig::conv-citeseer}
		\includegraphics[width=0.24\textwidth]{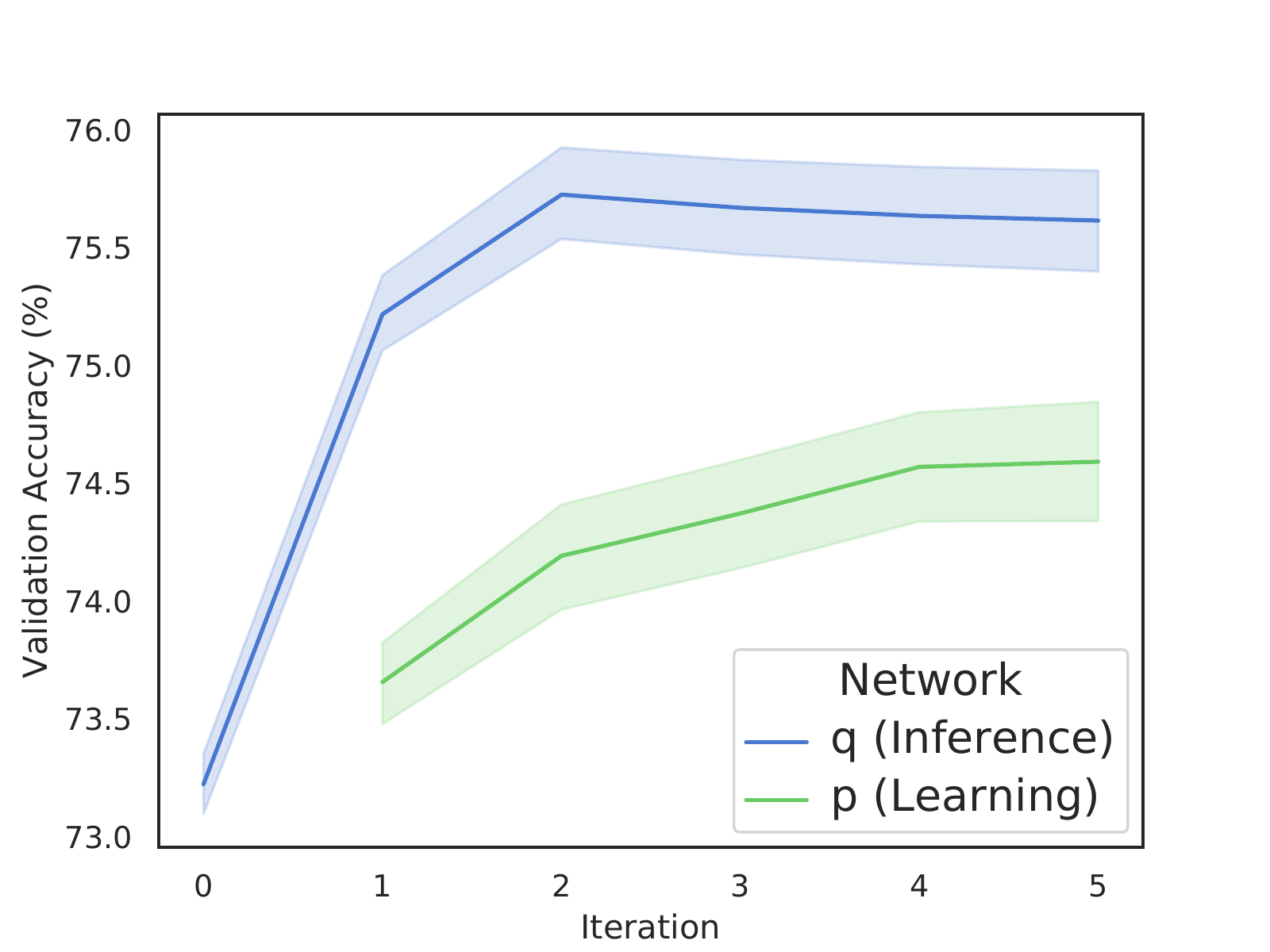}
	}
	}
	\vspace{-0.3cm}
	\caption{Convergence analysis.}
	\label{fig::conv}
	\vspace{-0.3cm}
\end{figure}

\section{Conclusion}

This paper studies semi-supervised object classification, which is a fundamental problem in relational data modeling, and a novel approach called the \method{} is proposed. \method{} employs a conditional random field to model the joint distribution of object labels, and two graph neural networks are utilized to improve both the inference and learning procedures. Experimental results on three tasks prove the effectiveness of \method{}. In the future, we plan to further improve \method{} to deal with graphs with multiple edge types, such as knowledge graphs~\cite{bollacker2008freebase}.

\section*{Acknowledgements}
We would like to thank all the anonymous reviewers for their insightful comments. We also thank Carlos Lassance, Mingzhe Wang, Zhaocheng Zhu, Weiping Song for reviewing the paper before submission. Jian Tang is supported by the Natural Sciences and Engineering Research Council of Canada, as well as the Canada CIFAR AI Chair Program.

\bibliography{paper}
\bibliographystyle{icml2018}

\clearpage

\appendix

\onecolumn

\section{Optimality Condition for $q_\theta$}

\begin{theorem}
Given an object $n$, consider a fixed variational distribution $q_\theta(\mathbf{y}_{\text{NB}(n) \cap U}|\mathbf{x}_V)$ for nodes in $\text{NB}(n) \cap U$. Then the optimum of $q_\theta(\mathbf{y}_n|\mathbf{x}_V)$, denoted by $q^*(\mathbf{y}_n|\mathbf{x}_V)$, is characterized by the following condition:
\begin{equation}\nonumber
\begin{aligned}
&\log q^*(\mathbf{y}_n|\mathbf{x}_V) = \mathbb{E}_{ q_\theta(\mathbf{y}_{\text{NB}(n) \cap U}|\mathbf{x}_V)}[\log p_\phi(\mathbf{y}_n|\mathbf{y}_{\text{NB}(n)},\mathbf{x}_V)]
+ \text{const}.
\end{aligned}
\end{equation}
\end{theorem}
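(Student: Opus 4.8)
The plan is to treat this as the standard mean-field fixed-point computation applied to the ELBO in Eq.~\eqref{eqn::elbo}, using the conditional-independence structure of the conditional random field in Eq.~\eqref{eqn::obj-srl} only at the very end to localize the update. Throughout, I fix every factor $q_\theta(\mathbf{y}_m|\mathbf{x}_V)$ for $m \in U \setminus n$ and optimize over the single factor $q_\theta(\mathbf{y}_n|\mathbf{x}_V)$; the final expression will be seen to depend on the fixed factors only through those indexed by $\text{NB}(n) \cap U$, which is exactly what the statement asserts.

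First I would isolate the dependence of the ELBO on $q_\theta(\mathbf{y}_n|\mathbf{x}_V)$. Substituting the mean-field form $q_\theta(\mathbf{y}_U|\mathbf{x}_V) = \prod_{m \in U} q_\theta(\mathbf{y}_m|\mathbf{x}_V)$ into the right-hand side of Eq.~\eqref{eqn::elbo}, the entropy term splits as a sum over $m \in U$ in which only the $m=n$ summand involves $q_\theta(\mathbf{y}_n|\mathbf{x}_V)$, and the energy term becomes $\mathbb{E}_{q_\theta(\mathbf{y}_n|\mathbf{x}_V)}\big[\,\mathbb{E}_{q_\theta(\mathbf{y}_{U\setminus n}|\mathbf{x}_V)}[\log p_\phi(\mathbf{y}_L,\mathbf{y}_U|\mathbf{x}_V)]\,\big]$. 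Collecting all terms not involving $q_\theta(\mathbf{y}_n|\mathbf{x}_V)$ into a constant, the ELBO equals $\mathbb{E}_{q_\theta(\mathbf{y}_n|\mathbf{x}_V)}[ f(\mathbf{y}_n) - \log q_\theta(\mathbf{y}_n|\mathbf{x}_V)] + \text{const}$, where $f(\mathbf{y}_n) \defeq \mathbb{E}_{q_\theta(\mathbf{y}_{U\setminus n}|\mathbf{x}_V)}[\log p_\phi(\mathbf{y}_L,\mathbf{y}_U|\mathbf{x}_V)]$.

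Next I would recognize this as a negative KL divergence. Since $\mathbf{y}_n$ ranges over a finite label set, $\tilde p(\mathbf{y}_n) \propto \exp\{f(\mathbf{y}_n)\}$ defines a proper normalized distribution, and the expression above equals $-\text{KL}(q_\theta(\mathbf{y}_n|\mathbf{x}_V)\,\|\,\tilde p(\mathbf{y}_n)) + \text{const}$; by nonnegativity of KL it is maximized exactly at $q^*(\mathbf{y}_n|\mathbf{x}_V) = \tilde p(\mathbf{y}_n)$, i.e. $\log q^*(\mathbf{y}_n|\mathbf{x}_V) = f(\mathbf{y}_n) + \text{const}$. It then remains to simplify $f$: write $\log p_\phi(\mathbf{y}_L,\mathbf{y}_U|\mathbf{x}_V) = \log p_\phi(\mathbf{y}_n|\mathbf{y}_{V\setminus n},\mathbf{x}_V) + \log p_\phi(\mathbf{y}_{V\setminus n}|\mathbf{x}_V)$, absorb the second term (constant in $\mathbf{y}_n$) into the constant, apply the Markov property read off Eq.~\eqref{eqn::obj-srl} to get $p_\phi(\mathbf{y}_n|\mathbf{y}_{V\setminus n},\mathbf{x}_V) = p_\phi(\mathbf{y}_n|\mathbf{y}_{\text{NB}(n)},\mathbf{x}_V)$, split $\text{NB}(n)$ into its labeled part (fixed to ground truth) and its unlabeled part $\text{NB}(n) \cap U$, and use the product form of $q_\theta$ to integrate out every factor $q_\theta(\mathbf{y}_m|\mathbf{x}_V)$ with $m \notin \text{NB}(n)$ to $1$. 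The expectation then collapses to $\mathbb{E}_{q_\theta(\mathbf{y}_{\text{NB}(n)\cap U}|\mathbf{x}_V)}[\log p_\phi(\mathbf{y}_n|\mathbf{y}_{\text{NB}(n)},\mathbf{x}_V)]$, which is the claimed identity.

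The step needing the most care is the constant bookkeeping: one must check that what is lumped into $\text{const}$ is genuinely free of $\mathbf{y}_n$ (not merely free of the optimization variable $q_\theta(\mathbf{y}_n|\mathbf{x}_V)$) so that exponentiating and renormalizing $f$ is legitimate, and one must explicitly invoke both the mean-field factorization and the CRF Markov property to justify reducing the ambient expectation over $q_\theta(\mathbf{y}_{U\setminus n}|\mathbf{x}_V)$ to one over $q_\theta(\mathbf{y}_{\text{NB}(n)\cap U}|\mathbf{x}_V)$. Everything else is routine.
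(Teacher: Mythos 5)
Your proposal is correct and follows essentially the same route as the paper's proof: both isolate the single mean-field factor $q_\theta(\mathbf{y}_n|\mathbf{x}_V)$, rewrite the objective as a negative KL divergence against the normalized exponential of $\mathbb{E}_{q_\theta(\mathbf{y}_{U\setminus n}|\mathbf{x}_V)}[\log p_\phi(\mathbf{y}_L,\mathbf{y}_U|\mathbf{x}_V)]$ (your $f$ is exactly the paper's $\log\mathcal{F}$), and then localize via the CRF Markov property and the product form of $q_\theta$. Starting from the ELBO rather than from $-\text{KL}(q_\theta(\mathbf{y}_U)\,\|\,p_\phi(\mathbf{y}_U|\mathbf{y}_L))$ is an immaterial difference, since the two objectives differ only by the constant $\log p_\phi(\mathbf{y}_L|\mathbf{x}_V)$.
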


\begin{proof}

To make the notation more concise, we will omit $\mathbf{x}_V$ in the following proof (e.g. simplifying $q_\theta(\mathbf{y}_n|\mathbf{x}_V)$ as $q_\theta(\mathbf{y}_n)$). Recall that our overall goal for $q_\theta(\mathbf{y}_n)$ is to minimize the KL divergence between $q_\theta(\mathbf{y}_U)$ and $p_\phi(\mathbf{y}_U|\mathbf{y}_L)$. Therefore, the objective function for $q_\theta(\mathbf{y}_n)$ could be formulated as follows:

\begin{equation}\nonumber
\label{eqn::appendix-1}
\begin{aligned}
O(q_\theta(\mathbf{y}_{n})) =& -\text{KL}(q_\theta(\mathbf{y}_U)||p_\phi(\mathbf{y}_U|\mathbf{y}_L)) \\
=&\sum_{\mathbf{y}_U} q_\theta(\mathbf{y}_U)[ \log p_\phi(\mathbf{y}_U|\mathbf{y}_L) - \log q_\theta(\mathbf{y}_U)]
\\
=&\sum_{\mathbf{y}_U} \left(\prod_{n'} q_\theta(\mathbf{y}_{n'})\right)\left[\log p_\phi(\mathbf{y}_U,\mathbf{y}_L) - \sum_{n'} \log q_\theta(\mathbf{y}_{n'})\right] + \text{const}
\\
=&\sum_{\mathbf{y}_{n}} \sum_{\mathbf{y}_{U \setminus n}} \left(q_\theta(\mathbf{y}_{n}) \prod_{n' \neq n} q_\theta(\mathbf{y}_{n'})\right)\left[\log p_\phi(\mathbf{y}_U,\mathbf{y}_L) - \sum_{n'} \log q_\theta(\mathbf{y}_{n'})\right] + \text{const}
\\
=&\sum_{\mathbf{y}_{n}} q_\theta(\mathbf{y}_{n}) \sum_{\mathbf{y}_{U \setminus n}} \prod_{n' \neq n} q_\theta(\mathbf{y}_{n'}) \log p_\phi(\mathbf{y}_U,\mathbf{y}_L) - \\
&\sum_{\mathbf{y}_{n}} q_\theta(\mathbf{y}_{n}) \sum_{\mathbf{y}_{U \setminus n}} \prod_{n' \neq n} q_\theta(\mathbf{y}_{n'}) \left[\sum_{n' \neq n} \log q_\theta(\mathbf{y}_{n'}) + \log q_\theta(\mathbf{y}_{n})\right] + \text{const}
\\
=&\sum_{\mathbf{y}_{n}} q_\theta(\mathbf{y}_{n}) \log \mathcal{F}(\mathbf{y}_{n}) - \sum_{\mathbf{y}_{n}} q_\theta(\mathbf{y}_{n}) \log q_\theta(\mathbf{y}_{n}) + \text{const} \\
=& -\text{KL}\left(q_\theta(\mathbf{y}_{n})||\frac{\mathcal{F}(\mathbf{y}_{n})}{Z}\right) + \text{const}.
\end{aligned}
\end{equation}
Here, $Z$ is a normalization term, which makes $\mathcal{F}(\mathbf{y}_{n})$ a valid distribution on $\mathbf{y}_{n}$, and we have:
\begin{equation}\nonumber
\begin{aligned}
\log \mathcal{F}(\mathbf{y}_{n}) &= \sum_{\mathbf{y}_{U \setminus n}} \prod_{n' \neq n} q_\theta(\mathbf{y}_{n'}) \log p_\phi(\mathbf{y}_U,\mathbf{y}_L) = \mathbb{E}_{q_\theta(\mathbf{y}_{U \setminus n})} [\log p_\phi(\mathbf{y}_U,\mathbf{y}_L)].
\end{aligned}
\end{equation}
Based on the above mathematical manipulation of the objective function $O(q_\theta(\mathbf{y}_{n}))$, if a local optimal of $q_\theta(\mathbf{y}_{n})$ is denoted by $q^*(\mathbf{y}_n|\mathbf{x}_V)$, then $q^*(\mathbf{y}_n|\mathbf{x}_V)$ must be equal to $\frac{\mathcal{F}(\mathbf{y}_{n})}{Z}$, and thus we have:
\begin{equation}\nonumber
\begin{aligned}
\log q^*(\mathbf{y}_{n}) &= \log \mathcal{F}(\mathbf{y}_{n}) + \text{const} \\ &=\mathbb{E}_{q_\theta(\mathbf{y}_{U \setminus n})} [\log p_\phi(\mathbf{y}_U,\mathbf{y}_L)] + \text{const} \\
&=\mathbb{E}_{q_\theta(\mathbf{y}_{U \setminus n})} [\log p_\phi(\mathbf{y}_{n}|\mathbf{y}_{V \setminus n})] + \text{const} \\
&=\mathbb{E}_{q_\theta(\mathbf{y}_{U \setminus n})} [\log p_\phi(\mathbf{y}_{n}|\mathbf{y}_{\text{NB}(n)})] + \text{const} \\
&=\mathbb{E}_{q_\theta(\mathbf{y}_{\text{NB}(n) \cap U})} [\log p_\phi(\mathbf{y}_{n}|\mathbf{y}_{\text{NB}(n)})] + \text{const}.
\end{aligned}
\end{equation}
Here, $p_\phi(\mathbf{y}_{n}|\mathbf{y}_{V \setminus n})=p_\phi(\mathbf{y}_{n}|\mathbf{y}_{\text{NB}(n)})$ is based on the conditional independence property of Markov networks.
\end{proof}

\section{Additional Experiment}

\subsection{Results on Random Data Splits}

\begin{table}[bht]
	\caption{Results on random data splits (\%).}
	\label{tab::results-random}
	\begin{center}
	\scalebox{0.9}
	{
		\begin{tabular}{c C{1.3cm} C{1.3cm} C{1.3cm}}\hline
		    \textbf{Algorithm}	& \textbf{Cora} & \textbf{Citeseer} & \textbf{Pubmed} \\ 
		    \hline
		    GCN  & 81.5 & 71.3 & 80.3 \\
		    GAT  & 82.1 & 71.5 & 80.1 \\
		    GMNN & \textbf{83.1} & \textbf{73.0} & \textbf{81.9}  \\
		    \hline
	    \end{tabular}
	}
	\end{center}
\end{table}

In the previous experiment, we have seen that \method{} significantly outperforms all the baseline methods for semi-supervised object classification under the data splits from~\citet{yang2016revisiting}. To further validate the effectiveness of \method{}, we also evaluate \method{} on some random data splits. Specifically, we randomly create 10 data splits for each dataset. The size of the training, validation and test sets in each split is the same as the split in~\citet{yang2016revisiting}. We compare \method{} with GCN~\cite{kipf2016semi} and GAT~\cite{velivckovic2018graph} on those random data splits, as they are the most competitive baseline methods. For each data split, we run each method with 10 different seeds, and report the overall mean accuracy in Tab.~\ref{tab::results-random}. We see \method{} consistently outperforms GCN and GAT on all datasets, proving the effectiveness of \method{}.

\subsection{Results on Few-shot Learning Settings}

\begin{table}[bht]
	\caption{Results on few-shot learning settings (\%).}
	\label{tab::results-few}
	\begin{center}
	\scalebox{0.9}
	{
		\begin{tabular}{c C{1.3cm} C{1.3cm} C{1.3cm}}\hline
		    \textbf{Algorithm}	& \textbf{Cora} & \textbf{Citeseer} & \textbf{Pubmed} \\ 
		    \hline
		    GCN  & 74.9 & 69.0 & 76.9 \\
		    GAT  & 77.0 & 68.9 & 75.4 \\
		    GMNN & \textbf{78.6} & \textbf{72.7} & \textbf{79.1}  \\
		    \hline
	    \end{tabular}
	}
	\end{center}
\end{table}

In the previous experiment, we have proved the effectiveness of \method{} for object classification in the semi-supervised setting. Next, we further conduct experiment in the few-short learning setting to evaluate the robustness of \method{} to data sparsity. We choose GCN and GAT for comparison. For each dataset, we randomly sample 5 labeled nodes under each class as training data, and run each method with 100 different seeds. The mean accuracy is shown in Tab.~\ref{tab::results-few}. We see \method{} significantly outperforms GCN and GAT. The improvement is even larger than the case of semi-supervised setting, where 20 labeled nodes under each class are used for training. This result proves that \method{} is robust to the sparsity of training data.

\subsection{Comparison with Self-training Methods}

\begin{table}[bht]
	\caption{Comparison with self-training methods (\%).}
	\label{tab::results-self}
	\begin{center}
	\scalebox{0.9}
	{
		\begin{tabular}{c C{1.3cm} C{1.3cm} C{1.3cm}}\hline
		    \textbf{Algorithm}	& \textbf{Cora} & \textbf{Citeseer} & \textbf{Pubmed} \\ 
		    \hline
		    Self-training  & 82.7 & 72.4 & 80.1 \\
		    GMNN & \textbf{83.4} & \textbf{73.1} & \textbf{81.4}  \\
		    \hline
	    \end{tabular}
	}
	\end{center}
\end{table}

Our proposed \method{} approach is related to self-training frameworks. In \method{}, the $p_\phi$ network essentially tries to annotate unlabeled objects, and the annotated objects are further treated as extra data to update $q_\theta$ through Eq.~\eqref{eqn::obj-q-u}. Similarly, in self-training, we typically use $q_\theta$ itself to annotate unlabeled objects, and collect extra training data for $q_\theta$. Next, we compare \method{} with the self-training method for semi-supervised object classification, and the results are presented in Tab.~\ref{tab::results-self}.

We see \method{} consistently outperforms the self-training method. The reason is that the self-training method uses $q_\theta$ for both inference and annotation, while \method{} uses two different networks $q_\theta$ and $p_\phi$ to collaborate with each other. The information captured by $q_\theta$ and $p_\phi$ is complementary, and therefore \method{} achieves much better results.

\subsection{Comparison of Different Approximation Methods}

\begin{table}[bht]
	\caption{Comparison of different approximation methods (\%).}
	\label{tab::results-approximation}
	\begin{center}
	\scalebox{0.9}
	{
		\begin{tabular}{c C{1.3cm} C{1.3cm} C{1.3cm}}\hline
		    \textbf{Method}	& \textbf{Cora} & \textbf{Citeseer} & \textbf{Pubmed} \\ 
		    \hline
		    Single Sample & 82.1 & 71.5 & 80.4 \\
		    Multiple Samples & 83.2 & 72.5 & 81.1 \\
		    Annealing & \textbf{83.4}& \textbf{73.1} & \textbf{81.4} \\
		    Max Pooling & 83.2 & 72.8 & 81.2  \\
		    Mean Pooling & \textbf{83.4} & 72.6 & 80.5 \\
		    \hline
	    \end{tabular}
	}
	\end{center}
\end{table}

In \method{}, we use a mean-field variational distribution $q_\theta$ for inference, and the optimal $q_\theta$ is given by the fixed-point condition in Eq.~\eqref{eqn::optimal-condition}. Learning the optimal $q_\theta$ requires computing the right-hand side of Eq.~\eqref{eqn::optimal-condition}, which involves the expectation with respect to $q_\theta(\mathbf{y}_{\text{NB}(n) \cap U}|\mathbf{x}_V)$ for each node $n$. To estimate the expectation, we notice that $q_\theta(\mathbf{y}_{\text{NB}(n) \cap U}|\mathbf{x}_V)$ can be factorized as $\prod_{n' \in \text{NB}(n) \cap U} q_\theta(\mathbf{y}_{n'}|\mathbf{x}_V)$. Based on that, we can develop several empirical approximation methods.

\smallskip
\noindent \textbf{Single Sample.} 
The simplest way is to draw a single sample $\mathbf{\hat{y}}_{n'} \sim q_\theta(\mathbf{y}_{n'}|\mathbf{x}_V)$ for each node $n' \in \text{NB}(n) \cap U$, and then we could use the sample to estimate the expectation.

\smallskip
\noindent \textbf{Multiple Samples.} 
In practice, we can also draw multiple samples from $q_\theta(\mathbf{y}_{n'}|\mathbf{x}_V)$ for each node $n' \in \text{NB}(n) \cap U$ to estimate the expectation. Such a method has lower variance but entails higher cost.

\smallskip
\noindent \textbf{Annealing.} Another method is to introduce an annealing parameter $\tau$ in $q_\theta(\mathbf{y}_{n'}|\mathbf{x}_V)$, so that we have:
\begin{equation}\nonumber
    q_\theta(\mathbf{y}_{n'}|\mathbf{x}_V)=\text{Cat}(\mathbf{y}_{n'}|\text{softmax}(\frac{W_\theta \mathbf{h}_{\theta,n'}}{\tau})).
\end{equation}
Then we can set $\tau$ to a small value (e.g. 0.1) and draw a sample $\mathbf{\hat{y}}_{n'} \sim q_\theta(\mathbf{y}_{n'}|\mathbf{x}_V)$ for $n' \in \text{NB}(n) \cap U$ to estimate the expectation, which typically has lower variance.

\smallskip
\noindent \textbf{Max Pooling.} Another method is max pooling, where we set $\mathbf{\hat{y}}_{n'} =\arg\max_{\mathbf{y}_{n'}} q_\theta(\mathbf{y}_{n'}|\mathbf{x}_V)$ for $n' \in \text{NB}(n) \cap U$, and use $\{\mathbf{\hat{y}}_{n'}\}_{n' \in \text{NB}(n) \cap U}$ as a sample to estimate the expectation.

\smallskip
\noindent \textbf{Mean Pooling.} Besides, we can also use the mean pooling method similar to the soft attention method in~\citet{deng2018latent}. Specifically, suppose that we have $C$ classes for classification, then the label of each node can be viewed as a one-hot $C$-dimensional vector. Based on that, we set $\mathbf{\bar{y}}_{n'} = \mathbb{E}_{q_\theta(\mathbf{y}_{n'}|\mathbf{x}_V)}[\mathbf{y}_{n'}]$ for each unlabeled neighbor $n'$ of the object $n$, which can be understood as a soft label vector of that neighbor. For each labeled neighbor $n'$ of the object $n$, we set $\mathbf{\bar{y}}_{n'}$ as the one-hot label vector. Then we can approximate the expectation as:
\begin{equation}\nonumber
\begin{aligned}
    &\mathbb{E}_{ q_\theta(\mathbf{y}_{\text{NB}(n) \cap U}|\mathbf{x}_V)}[\log p_\phi(\mathbf{y}_n|\mathbf{y}_{\text{NB}(n)},\mathbf{x}_V)] \\ \approx & p_\phi(\mathbf{y}_n|\mathbf{\bar{y}}_{\text{NB}(n)},\mathbf{x}_V)] \defeq \text{Cat}(\mathbf{y}_n|\text{softmax}(W_\phi\mathbf{h}_{\phi,n})), \quad \text{with}\quad \mathbf{h}_{\phi,n} = g(\mathbf{\bar{y}}_{\text{NB}(n)},E),
\end{aligned}
\end{equation}
where the object representation $\mathbf{h}_{\phi,n}$ is learned by feeding $\{ \mathbf{\bar{y}}_{n'} \}_{n' \in \text{NB}(n)}$ as features in a graph neural network $g$.

\smallskip
\noindent \textbf{Comparison.}
We empirically compare different methods in the semi-supervised object classification task, where we use 10 samples for the multi-sample method and the parameter $\tau$ in the annealing method is set as 0.1. Tab.~\ref{tab::results-approximation} presents the results. We see the annealing method consistently outperforms other methods on all datasets, and therefore we use the annealing method for all the experiments in the paper.

\subsection{Best Results with Standard Deviation}

\begin{table}[bht]
	\caption{Best results on semi-supervised object classification (\%).}
	\label{tab::results-best}
	\begin{center}
	\scalebox{0.8}
	{
		\begin{tabular}{c C{2.2cm} C{2.2cm} C{2.2cm}}\hline
		    \textbf{Algorithm}	& \textbf{Cora} & \textbf{Citeseer} & \textbf{Pubmed} \\ 
		    \hline
		    GAT & 83.0 $\pm$ 0.7  & 72.5 $\pm$ 0.7 & 79.0 $\pm$ 0.3  \\
		    GMNN & 83.675 $\pm$ 0.900  & 73.576 $\pm$ 0.795 & 81.922 $\pm$ 0.529  \\
		    \hline
		    $p$-value & $<0.0001$ & $<0.0001$ & $<0.0001$ \\
		    \hline
	    \end{tabular}
	}
	\end{center}
\end{table}

Finally, we present the best mean accuracy together with the standard deviation of \method{} over 100 runs in Tab.~\ref{tab::results-best}. The improvement over GAT is statistically significant.

\end{document}